\newtheorem{theorem}{Theorem}
\Crefname{equation}{Eq.}{Eqs.}
\Crefname{figure}{Fig.}{Figs.}
\renewcommand{\N}{\mathds{N}}                                                
\newcommand{\N}{\mathds{N}}
\renewcommand{\C}{\mathds{C}}                                             
\newcommand{\C}{\mathds{C}}
\def\argmax{\mathop{\sf arg\,max}}                                          
\def\argmin{\mathop{\sf arg\,min}}                                          
\newcommand{\sign}{\operatorname{sign}}                                     
\newcommand{\abs}[1]{\left|#1\right|}
\newcommand{\id}{\boldsymbol{I}}                                                
\newcommand{\diag}{\operatorname{diag}}                                     
\newcommand{\matb}[1]{ 														
	\begin{bmatrix}
		#1
	\end{bmatrix}
}
\newcommand{\trsp}{{\scriptscriptstyle\top}}									
\newcommand{\inv}{{\raisebox{.2ex}{$\scriptscriptstyle-1$}}}
\newcommand{\nd}[1]{\bm{#1}}
\newcommand{\st}{\mbox{s.t.}}
\newcommand{\norm}[1]{\lVert #1 \rVert}
\newcommand{\bignorm}[1]{\left\lVert #1 \right\rVert}
\title{\LARGE \bf
Projection-based first-order constrained optimization solver for robotics
}
\author{Hakan Girgin, Tobias L\"ow, Teng Xue and Sylvain Calinon
\thanks{The authors are with the Idiap Research Institute, Martigny, Switzerland
and with EPFL, Lausanne, Switzerland.}
\thanks{This work was supported by the State Secretariat for Education,
Research and Innovation in Switzerland for participation in the European
Commission Horizon Europe Program through the INTELLIMAN
project (https://intelliman-project.eu/, HORIZON-CL4-Digital-Emerging
Grant 101070136) and the SESTOSENSO project (http://sestosenso.eu/,
HORIZON-CL4-Digital-Emerging Grant 101070310).}
}
\begin{document}

\maketitle
\thispagestyle{empty}
\pagestyle{empty}

\begin{abstract}
Robot programming tools ranging from inverse kinematics (IK) to model predictive control (MPC) are most often described as constrained optimization problems. Even though there are currently many commercially-available second-order solvers, robotics literature recently focused on efficient implementations and improvements over these solvers for real-time robotic applications. However, most often, these implementations stay problem-specific and are not easy to access or implement, or do not exploit the geometric aspect of the robotics problems. In this work, we propose to solve these problems using a fast, easy-to-implement first-order method that fully exploits the geometric constraints via Euclidean projections, called Augmented Lagrangian Spectral Projected Gradient Descent (ALSPG). We show that 1. using projections instead of full constraints and gradients improves the performance of the solver and 2. ALSPG stays competitive to the standard second-order methods such as iLQR in the unconstrained case. We showcase these results with IK and motion planning problems on simulated examples and with an MPC problem on a 7-axis manipulator experiment.
\end{abstract}

\section{Introduction}
Many tasks in robotics can be framed as constrained optimization problems. The inverse kinematics (IK) problem finds a configuration of the robot that corresponds to a desired pose in the task space while satisfying constraints such as joint limits or center-of-mass stability. Motion planning and optimal control determine a trajectory of configurations and/or control commands achieving the task subject to the dynamics and the constraints of the task and the environment over a certain time horizon. Model predictive control (MPC) recasts the optimal control problems with shorter horizons to solve simpler constrained optimization problems in real-time. In this work, we present a projection-based first-order optimization method that can be implemented and used for all these aforementioned problems.

There are many commercially available second-order solvers to address general constrained optimization problems such as SNOPT \cite{snopt}, SLSQP \cite{slsqp}, LANCELOT \cite{lancelot} and IPOPT \cite{ipopt}. In robotics, the literature on optimization mainly focuses on developing solvers that effectively solve each robotic problem separately. For example, in the motion planning literature, one can find many constrained variants of differential dynamic programming (DDP) \cite{constrainedDDP} or iterative  linear quadratic regulator (iLQR) \cite{tross}, TrajOpt \cite{trajopt}, CHOMP \cite{chomp} and ALTRO \cite{altro}. Furthermore, some of these solvers are not open-source and difficult to implement, which hinders benchmarking and potential improvements. As powerful as these solvers are, their applications for finding real-time feedback mechanisms such as closed-loop inverse kinematics and MPC requires tuning and adaptations of the solver. In this chapter, we address these challenges by proposing a very simple, yet powerful solver that can be easily implemented without having large memory requirements.

\begin{figure}
	\centering
	\subcaptionbox{\label{fig:ik_point}}
	[0.25\linewidth]{\includegraphics[width=0.25\columnwidth]{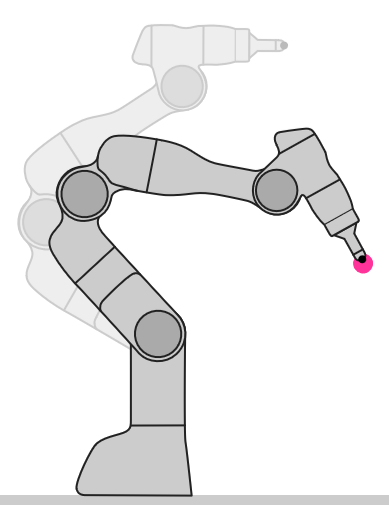}}
	\subcaptionbox{\label{fig:ik_line}}
[0.22\linewidth]{\includegraphics[width=0.22\columnwidth]{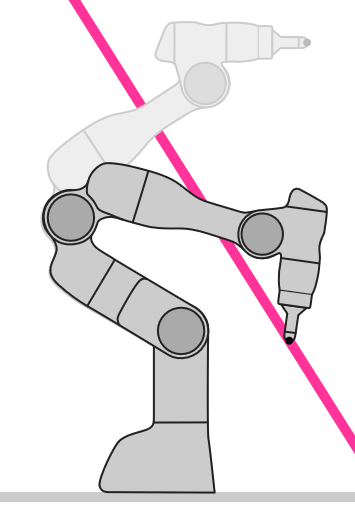}}
	\subcaptionbox{\label{fig:ik_circle}}
[0.252\linewidth]{\includegraphics[width=0.252\columnwidth]{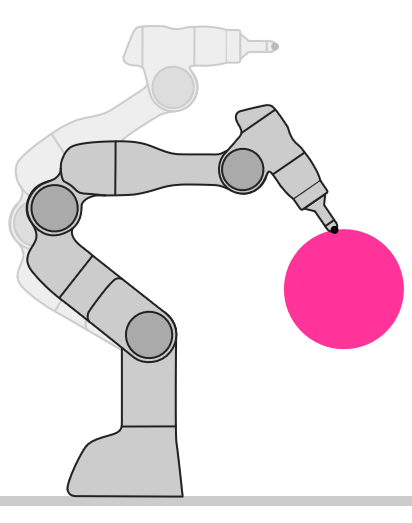}}
	\subcaptionbox{\label{fig:ik_square}}
[0.24\linewidth]{\includegraphics[width=0.24\columnwidth]{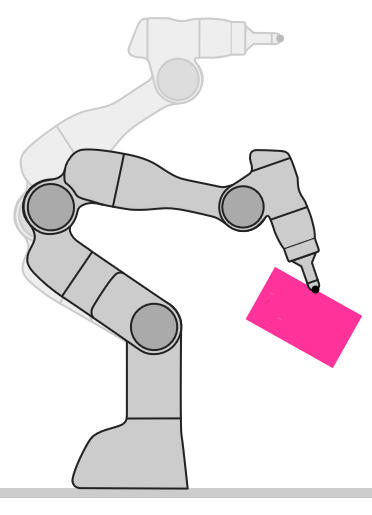}}
\caption{Projection view of inverse kinematics problem. 
(a) Reaching a point (standard IK problem): $\mathcal{C}_{\nd p}=\{\nd p \: | \: \nd p=\nd p_d\}$.
(b) Reaching under/above/on a plane (in the halfspace): $\mathcal{C}_{\nd p}=\{\nd p \: | \: \nd a^\trsp \nd p {\leq} \nd b\}$.
(c) Reaching inside/outside/on a circle: $\mathcal{C}_{\nd p}=\{\nd p \: | \: \norm{\nd p - \nd p_d}_2^2 {\leq} r^2\}$.
(d) Reaching inside/outside/on a rectangle: $\mathcal{C}_{\nd p}=\{\nd p \: | \: \norm{\nd A(\nd p - \nd p_d)}_{\infty} {\leq} L/2\}$.
These problems can be tested online with closed-loop controllers\protect\footnotemark, created as extensions of the \emph{Robotics Codes from Scratch} toolbox\protect\footnotemark. 
	}
\label{fig:ik_examples}
\end{figure}

The constraints in many of these problems are described as geometric set primitives or their combinations (see \Cref{table:projections}). Examples include joint angle or velocity limits or center-of-mass stability as bounded domain sets, avoiding/reaching geometric shapes such as spheres and convex polytopes as hyperplane and quadric sets, friction cone constraints as second-order cone sets. These constraints have in common that they can be formulated as projections rather than constraints. We argue that exploiting the projection capability of these sets instead of treating them as generic constraints in the solvers can significantly improve the performance.

Projected gradient descent is the simplest algorithm that takes into account these projections. Its idea is to project the gradient to have a next iterate inside the constraint set. In the optimization literature, a first-order projection-based solver called spectral projected gradient descent (SPG) has emerged as an alternative \cite{birgin2009spectral}. SPG has been studied and applied to many fields because of its great practical performance even compared to second-order constrained optimization solvers \cite{birgin2014spectral}. Its extension to additional arbitrary constraints has been proposed as within augmented Lagrangian methods \cite{andreani2008augmented, birgin2014practical, jia2022augmented}. However, as the application of this idea to popular second-order methods in robotics is not trivial \cite{schmidt2009optimizing}, the usefulness of projections in the field has been overlooked.

In this letter, we integrate the recent work of \cite{jia2022augmented} into robotics optimization problems ranging from IK to MPC by providing the most common Euclidean projections with an additional rectangular projection. We propose an extension with multiple projections and additional nonlinear constraints. In particular, we provide an efficient direct-shooting optimal control formulation of this solver to address motion planning and MPC problems.

\section{Related work}
\label{sec:related}
Euclidean projections and the analytical expressions to many projections can be found in \cite{projections}. It also gives a general theory on how to project onto a level set of an arbitrary function using KKT conditions. Extensive studies and theoretical background on the projections and their properties can be found in \cite{bauschke2011convex}. In \cite{usmanova2021fast}, Usmanova \emph{et al.} propose an efficient algorithm for the projection onto arbitrary convex constraint sets and show that exploiting projections in the optimization significantly increases the performance. In \cite{bauschke2015projection}, Bauschke and Koch discuss and benchmark algorithms for finding the projection onto the intersection of convex sets. One of the main algorithms for this is Dykstra's alternating projection algorithm \cite{dykstraproject}.

The simplest algorithm that exploits projections is the projected gradient descent. Spectral projected gradient descent improves over this by exploiting the curvature information via its spectral stepsizes. A detailed review on spectral projected gradient methods is given in \cite{birgin2014practical}.

In \cite{torrisi2018projected}, Torrisi \emph{et al.} propose to use a projected gradient descent algorithm to solve the subproblems of sequential quadratic programming (SQP). They show that their method can solve MPC of an inverted pendulum faster than SNOPT. Our work is closest to theirs with the differences that we use SPG instead of a vanilla projected gradient descent to solve the subproblems of augmented Lagrangian instead of SQP. Instead, we propose a direct way of handling multiple projections and inequality constraints, which is not trivial in \cite{torrisi2018projected}. 

In \cite{giftthaler2017projection}, Giftthaler and Buchli propose a projection of the update direction of the control input onto the nullspace of the linearized constraints in iLQR. This approach can only handle simple equality constraints (for example, velocity-level constraints of second-order systems) and cannot treat position-level constraints for such systems, which is a very common and practical class of constraints in real world applications.

\section{Background}
\label{sec:motivation}
\begin{table*}[tph!]
	\caption{Projections onto bounded domain, affine hyperplane, quadric and second-order cone }
	\centering
	\setlength\tabcolsep{2pt}
	\begin{tabular}{| c | c | c | c | c | c |}
		\cline{2-5}
		\multicolumn{1}{c|}{}       
		& Bounds
		& Affine hyperplane
		& Quadric
		& Second-order cone
		\\ \hline
		$\mathcal{C}$ 
		& $l{\leq} x{\leq} u$
		& $l{\leq}\nd a^\trsp \nd x {\leq} u$
		& $l{\leq}\frac{1}{2}\nd x^\trsp\nd x {\leq} u$
		& $\norm{\nd x} {\leq} t$
		\\ \hline
		$\Pi_{\mathcal{C}}$ &
		$ \begin{cases}
			x & \!\!\text{if } l{\leq} x {\leq} u\\
			u & \!\!\text{if }  x {>} u\\
			l & \!\!\text{if }  x {<} l
		\end{cases}$  
		& $ \begin{cases}
			\nd x & \!\!\text{if } l{\leq}\nd a^\trsp \nd x {\leq}u\\
			\nd x - \frac{\nd a(\nd a^{\trsp}\nd x - u)}{\norm{\nd a}^2_2} &\!\! \text{if } \nd a^\trsp \nd x {>} u\\
			\nd x - \frac{\nd a(\nd a^\trsp \nd x - l)}{\norm{\nd a}^2_2} &\!\! \text{if } \nd a^\trsp \nd x {<} l
		\end{cases} $ 
		& $\begin{cases}
			\nd x &\!\! \text{if } l{\leq}\frac{1}{2}\nd x^\trsp\nd x {\leq} u\\
			\frac{\nd x\sqrt{2u}}{\norm{\nd x}} &\!\! \text{if } \frac{1}{2}\nd x^\trsp\nd x {>} u\\
			\frac{\nd x\sqrt{2l}}{\norm{\nd x}} &\!\!\text{if } l{>}\frac{1}{2}\nd x^\trsp\nd x
		\end{cases}$
		& $\begin{cases}
			(\nd x, t), &\!\! \text{if } \norm{\nd x} {\leq} t\\
			(\nd 0, 0), &\!\! \text{if }  \norm{\nd x} {\leq} {-}t\\
			\frac{\norm{\nd x} + t}{2}(\frac{\nd x}{\norm{\nd x}}, 1) &\!\! \text{otherwise}
		\end{cases} $
		\\ \hline
	\end{tabular}
	\label{table:projections}
\end{table*}

In this section, we motivate the use of projections in standard robotic tasks such as hierarchical inverse kinematics and obstacle avoidance.

\subsection{Euclidean projections onto sets}
\label{sub:euc}
The solution $\nd x^*$ to the following constrained optimization problem 
\begin{equation}
	\displaystyle \min_{\nd x}  \norm{\nd x - \nd x_0}_2^2 
	\quad\st\quad \nd x \in \mathcal{C}
	\label{eq:projection}
\end{equation}
is called an Euclidean projection of the point $\nd x_0$ onto the set $\mathcal{C}$ and is denoted as $\nd x^*{=}\Pi_{\mathcal{C}}(\nd x_0)$. This operation determines the point $\nd x \in \mathcal{C}$ that is closest to $\nd x_0$ in Euclidean sense. For many sets $\mathcal{C}$, $\Pi_{\mathcal{C}}(\cdot)$ admits analytical expressions that are given in \Cref{table:projections}. Even though, usually these sets are convex (e.g. bounded domains), some nonconvex sets also admit analytical solution(s) that are easy to compute (e.g. being outside of a sphere). Note that many of these sets are frequently used in robotics, from joint/torque limits and avoiding spherical/square obstacles to satisfying virtual fixtures defined in the task space of the robot.

\addtocounter{footnote}{-1}
\footnotetext{\url{https://hgirgin.github.io/IKSPG.html}}
\addtocounter{footnote}{1}
\footnotetext{\url{https://robotics-codes-from-scratch.github.io/}}
\subsection{Projection view of inverse kinematics}
The inverse kinematics (IK) problem in robotics corresponds to finding a joint configuration $\nd q^*$ of the robot that corresponds to a given desired end-effector pose $\nd p_d$. Iterative procedures are developed to robustly solve this problem considering singularities at the Jacobian level. The success and the convergence speed of these algorithms depend on the initialization of the problem, which is often selected as the current joint configuration of the robot $\nd q_0$. With this view in mind, we can express IK as a projection problem of the initial joint angles $\nd q_0$ onto a set $\mathcal{C}_{\nd q}$ and of the initial end-effector position $\nd p_0$ onto a set $\mathcal{C}_{\nd p}$. These two sets are assumed to be nonempty and closed sets that admit tractable and efficient projections. A common example for $\mathcal{C}_{\nd q}$ is the box constraints for the joint limits. \Cref{fig:ik_examples} shows examples for the set $\mathcal{C}_{\nd p}$ with \Cref{fig:ik_point} showing an equality constraint to a desired point, \Cref{fig:ik_line} shows an affine hyperplane constraint for virtually limiting the robot to be under/on a plane, \Cref{fig:ik_circle} and \Cref{fig:ik_square} show quadric constraints for the end-effector to stay inside/outside or on the boundary of a circle/square. In this work, we exploit these easy projections in a first-order optimization solver with the claim of finding solutions faster than standard constrained optimization problems.

\section{Augmented Lagrangian Spectral Projected Gradient Descent for Robotics}
\label{sec:method}
This section gives the spectral projected gradient descent (SPG) algorithm along with the nonmonotone line search procedure. These algorithms are easy to implement without big memory requirements and yet result in powerful solvers. Next, we give the augmented Lagrangian spectral projected gradient descent (ALSPG) algorithm with extensions to general inequality constraints and multiple projections. 
\subsection{Spectral projected gradient descent}
Spectral projected gradient descent (SPG) is an improved version of a vanilla projected gradient descent using spectral stepsizes. Its excellent numerical results even in comparison to second-order methods have been a point of attraction in the optimization literature \cite{birgin2014spectral}. SPG tackles constrained optimization problems in the form of 
\begin{equation}
	\displaystyle \min_{\nd x} f(\nd x)  
	\quad\st\quad \nd x \in \mathcal{C},
\end{equation}
by constructing a local quadratic model of the objective function 
\begin{align*}
f(\nd x)&\approxeq f(\nd x_k) + {\nabla f(\nd x_k)}^\trsp(\nd x - \nd x_k) + \frac{1}{2\gamma_k}\norm{\nd x - \nd x_k}_2^2, \\
&=\frac{1}{2\gamma_k}\norm{\nd x - (\nd x_k-\gamma_k \nabla f(\nd x_k) )}_2^2 + \text{const.},
\end{align*}
and by minimizing it subject to the constraints as
\begin{equation}
	\displaystyle \min_{\nd x}  \frac{1}{2\gamma_k}\norm{\nd x - (\nd x_k-\gamma_k \nabla f(\nd x_k) )}_2^2
	\quad\st\quad \nd x \in \mathcal{C},
\end{equation}
whose solution is an Euclidean projection as described in \Cref{sub:euc} and given by $\Pi_{\mathcal{C}}(\nd x_k-\gamma_k \nabla f(\nd x_k))$. The local search direction $\nd d_k$ for SPG is then given by
\begin{equation}
\nd d_k = \Pi_{\mathcal{C}}(\nd x_k-\gamma_k \nabla f(\nd x_k)) - \nd x_k,
\end{equation}
which is used in a nonmonotone line search (\Cref{algo:line_search}) with $\nd x_{k+1}=\nd x_k + \alpha_k \nd d_k$, to find $\alpha_k$ satisfying $f(\nd x_{k+1}) \leq f_{\text{max}} + \alpha_k \gamma_k {\nabla f(\nd x_k)}^\trsp \nd d_k$, where $f_{\text{max}}=\max\{f(\nd x_{k-j} ) \: | \: 0 \leq j \leq \min\{k, M - 1\}\}$. Nonmonotone line search allows for increasing objective values for some iterations $M$ preventing getting stuck at bad local minima. 

The choice of $\gamma_k$ affects the convergence properties significantly since it introduces curvature information to the solver. Note that when choosing $\gamma_k=1$, SPG is equivalent to the widely known projected gradient descent. SPG uses spectral stepsizes obtained by a least-square approximation of the Hessian matrix by $\gamma_k\id$. These spectral stepsizes are computed by proposals
\begin{align}
\gamma_k^{(1)}=\frac{\nd s_k^\trsp \nd s_k}{\nd s_k^\trsp \nd y_k} \quad  \text{and} \quad \gamma_k^{(2)}=\frac{\nd s_k^\trsp \nd y_k}{\nd y_k^\trsp \nd y_k}, 
\end{align}
where $\nd s_k=\nd x_{k}-\nd x_{k-1}$ and $\nd y_k=\nabla f(\nd x_{k})-\nabla f(\nd x_{k-1})$ \cite{birgin2014spectral}. In the case of quadratic objective function in the form of $\nd x^\trsp \nd Q \nd x$, these two values correspond to the maximum and minimum eigenvalues of the matrix $\nd Q$. Recent developments in SPG have shown that an alternating use of these spectral stepsizes lead to better performance. The initial spectral stepsize can be computed by setting $\bm{\bar{x}}_{0} = \nd x_{0}-\gamma_{\text{small}}\nabla f(\nd x_0)$, and computing  $\bm{\bar{s}}_{0}=\bm{\bar{x}}_{0}-\nd x_{0}$ and $\bm{\bar{y}}_{0}=\nabla f(\bm{\bar{x}}_{0})-\nabla f(\nd x_0)$. Note that this heuristic operation costs one more gradient computation. The final algorithm is given by \Cref{algo:spg}. 

\begin{algorithm}
	\caption{Non-monotone line search}
	\label{algo:line_search}
	Set $\beta = 10^{-4}$, $\alpha=1$, $M=10$,
	$c = {\nabla f(\nd x_k)}^\trsp \nd d_k$, \\
	$f_{\text{max}}=\max\{f(\nd x_{k-j} ) | 0 \leq j \leq \min\{k, M - 1\}\}$ \\
	\While(){$f(\nd x_k + \alpha\nd d_k) > f_{\text{max}} + \alpha \beta c$}
	{
		
		$\bar{\alpha} = -0.5\alpha^2 c\Big(f(\nd x_k + \alpha\nd d_k) - f(\nd x_k) - \alpha c\Big)^{-1}$ \\
		\eIf {$0.1\leq\bar{\alpha}\leq0.9$}
		{$\alpha=\bar{\alpha}$}{
			{$\alpha=\alpha/2$}}
		
	}
\end{algorithm}
\begin{algorithm}
	\caption{Spectral Projected Gradient Descent (SPG)}
	\label{algo:spg}
	Initialize $\nd x_k$, $\gamma_k$ $\epsilon{=}10^{-5}$, $k{=}0$\;
	\While(){$\norm{\Pi_{\mathcal{C}}(\nd x_k- \nabla f(\nd x_k)) - \nd x_k}_{\infty}>\epsilon$}
	{	
		Find a search direction by
		$\nd d_k = \Pi_{\mathcal{C}}(\nd x_k-\gamma_k \nabla f(\nd x_k)) - \nd x_k$ \\
		Do non-monotone line search using \Cref{algo:line_search} to find $\nd x_{k+1}$ \\
		
		Update the spectral stepsize \\
		$\nd s_{k+1}=\nd x_{k+1}-\nd x_{k}$ and $\nd y_{k+1}=\nabla f(\nd x_{k+1})-\nabla f(\nd x_k)$\\
		$\gamma^{(1)}=\frac{\nd s_{k+1}^\trsp \nd s_{k+1}}{\nd s_{k+1}^\trsp \nd y_{k+1}}$ and $\gamma^{(2)}=\frac{\nd s_{k+1}^\trsp \nd y_{k+1}}{\nd y_{k+1}^\trsp \nd y_{k+1}}$\\
		
		\eIf{$\gamma^{(1)}<2\gamma^{(2)}$}{$\gamma_{k+1} =\gamma^{(2)}$}
		{{$\gamma_{k+1} =\gamma^{(1)}-\frac{1}{2}\gamma^{(2)}$}}
		
		$k = k+1$ 
		
	}
\end{algorithm}
\subsection{Augmented Lagrangian spectral projected gradient descent (ALSPG)}
The SPG algorithm has been shown to be a powerful competition to second-order solvers in many ways. Each iteration can be significantly cheaper than a second-order method if a computationally efficient projection is used and provides better directions than other first-order methods. However, SPG alone is usually not sufficient to solve problems in robotics with complicated nonlinear constraints. In \cite{jia2022augmented}, Jia \emph{et al.} provides an augmented Lagrangian framework to solve problems with constraints $\nd g(\nd x) \in \mathcal{C}$ and $\nd x \in \mathcal{D}$, where $\nd g(\cdot)$ is a convex function, $\mathcal{C}$ is a convex set, and $\mathcal{D}$ is a closed nonempty set, both equipped with easy projections. 

In this section, we build on the work in \cite{jia2022augmented} with the extension of multiple projections and additional general equality and inequality constraints. The general optimization problem that we are tackling here is 
\begin{equation}
	\displaystyle \min_{\nd x \in \mathcal{D}}  f(\nd x)
	\quad\st\quad \nd g_i(\nd x) \in \mathcal{C}_i, \quad \forall i \in \{1,\hdots ,p\}
	\label{eq:general_problem}
\end{equation}
where $\nd g_i(\cdot)$ are assumed to be arbitrary nonlinear functions. Note that even though the convergence results in \cite{jia2022augmented} apply to the case when these are convex functions and convex sets, we found in practice that the algorithm is powerful enough to extend to more general cases. For simplicity, we redefine the additional equality constraints as an additional set to be projected onto with $\mathcal{C}_y=\{\nd y \: | \: \nd y = \nd 0\}$ with $\Pi_{\mathcal{C}_g}(\nd h(\cdot)) =\nd 0$. Also, we transform inequality constraints to equality constraints using the proposed method in the following \Cref{subsec:ineq}.

We use the following augmented Lagrangian function
\begin{align*}
\mathcal{L}(\nd x,\{\nd \lambda^{\mathcal{C}_i}, \rho^{\mathcal{C}_i}\}_{i=1}^p ) &= f(\nd x)+ \\
&\hspace{-2cm}\sum_{i=1}^{p}\frac{\rho^{\mathcal{C}_i}}{2}\bignorm{\nd g(\nd x) + \frac{\nd \lambda^{\mathcal{C}_i}}{\rho^{\mathcal{C}_i}} - \Pi_{\mathcal{C}_i}\Big(\nd g(\nd x) + \frac{\nd \lambda^{\mathcal{C}_i}}{\rho^{\mathcal{C}_i}} \Big)}_2^2
\end{align*}
whose derivative wrt $\nd x$ is given by
\begin{align*}
\nabla \mathcal{L}(\nd x, \{\nd \lambda^{\mathcal{C}_i}, \rho^{\mathcal{C}_i}\}_{i=1}^p) &= \nabla f(\nd x)+ \\
&\hspace{-3cm}\sum_{i=1}^{p}\frac{\rho^{\mathcal{C}_i}}{2}\nabla \nd g_i^\trsp(\nd x)\Big(\nd g_i(\nd x) + \frac{\nd \lambda^{\mathcal{C}_i}}{\rho^{\mathcal{C}_i}} - \Pi_{\mathcal{C}_i}\Big(\nd g_i(\nd x) + \frac{\nd \lambda^{\mathcal{C}_i}}{\rho^{\mathcal{C}_i}} \Big) \Big),
\end{align*}
using the property of convex Euclidean projections derivative $\nabla \norm{\nd g(\nd x)-\Pi(\nd g(\nd x))}_2^2 = \nabla \nd g(\nd x)^\trsp \big(\nd g(\nd x)-\Pi(\nd g(\nd x)\big)$, see \cite{bauschke2011convex} for details. This way, we obtain a formulation which does not need the gradient of the projection function $\Pi_{\mathcal{C}_i}(\cdot)$. One iteration of ALSPG optimizes the subproblem $\argmin_{\nd x\in\mathcal{D}}\mathcal{L}(\nd x, \{\nd \lambda^{\mathcal{C}_i}, \rho^{\mathcal{C}_i}\}_{i=1}^p)$ given $\{\nd \lambda^{\mathcal{C}_i}, \rho^{\mathcal{C}_i}\}_{i=1}^p$, and then updates these according to the next iterate. Defining the auxiliary function $V(\nd x, \nd \lambda^{\mathcal{C}_i}, \rho^{\mathcal{C}_i}){=}\bignorm{\nd g(\nd x)  - \Pi_{\mathcal{C}_i}\Big(\nd g(\nd x) + \frac{\nd \lambda^{\mathcal{C}_i}}{\rho^{\mathcal{C}_i}} \Big)}$, the algorithm is summarized in \Cref{algo:ALSPG}. Note that one can define and tune many heuristics around augmented Lagrangian methods with possible extensions to primal-dual methods. Here, we give only one possible way of implementing ALSPG.

\begin{algorithm}[t]
	\caption{ALSPG}
	\label{algo:ALSPG}
	Set  $\nd \lambda_{0}^{\mathcal{C}_i}{=}\nd 0$, $\rho_{0}^{\mathcal{C}_i}{=}0.1$, $k{=}0$, $\epsilon>0$ \\
	\While(){$\norm{\Delta \mathcal{L}(\nd x, \{\nd \lambda^{\mathcal{C}_i}, \rho^{\mathcal{C}_i}\}_{i=1}^p)}>\epsilon$}
	{
		 $\nd x_{k+1} =\argmin_{\nd x\in\mathcal{D}} \mathcal{L}(\nd x, \{\nd \lambda^{\mathcal{C}_i}, \rho^{\mathcal{C}_i}\}_{i=1}^p)$ with SPG in \Cref{algo:spg} \\
		 
		 \ForEach{$\mathcal{C}_i$}
		 {
		 		$\lambda_{k+1}^{\mathcal{C}_i} = \rho^{\mathcal{C}_i}\Big(\nd g_i(\nd x) + \frac{\nd \lambda^{\mathcal{C}_i}}{\rho^{\mathcal{C}_i}} - \Pi_{\mathcal{C}_i}\Big(\nd g_i(\nd x) + \frac{\nd \lambda^{\mathcal{C}_i}}{\rho^{\mathcal{C}_i}} \Big)\Big) $ \\
		 	
		 	\eIf{$V(\nd x_{k+1}, \nd \lambda_{k+1}^{\mathcal{C}_i}, \rho_k^{\mathcal{C}_i}) \leq V(\nd x_{k}, \nd \lambda_{k}^{\mathcal{C}_i}, \rho_k^{\mathcal{C}_i} ) $}
		 	{
		 		$\rho_{k+1}^{\mathcal{C}_i}{=}\rho_{k}^{\mathcal{C}_i}$ \\
		 	}{
		 		$\rho_{k+1}^{\mathcal{C}_i}{=}10\rho_{k}^{\mathcal{C}_i}$ \\
		 	}

 		}

	}
\end{algorithm}

\subsection{Handling of inequality constraints}
\label{subsec:ineq}
In robotics, one frequent and intuitive way of incorporating the constraints into the optimization problem is to use soft constraints and tune the weights until a satisfactory result is obtained. However, this approach breaks the hierarchy of the task without any real guarantee of constraint satisfaction. Soft constraints are obtained by transforming the hard constraint function into a positive cost function using auxiliary functions such as the barrier function. In this section, we propose to exploit such soft constraint functions as hard constraints to reduce each inequality constraint to an equality constraint, eliminating the need of using slack variables. Note that this procedure is in line with the construction of a standard augmented Lagrangian for inequality constraints. 

Let $g_i(\nd x)  \leq 0$ be the $i^{\text{th}}$ inequality constraint with $i{=}1, \dots, M$ and $g(\cdot): \mathbb{R}^{n} \to \mathbb{R}$. We define $h_i(\nd x) = \max(0, g_i(\nd x)$, where $h(\cdot): \mathbb{R}^{n} \to \mathbb{R}^{+}$. Then, the statement $g_i(\nd x))\leq 0$ is equivalent to $h_i(\nd x)= 0$. Moreover, we can generalize this statement to obtain one single equality constraint from any number of inequality constraints in order to increase the computational speed. This generalization is given by the theorem below.
\begin{theorem}
	\label{th:1}
	The statement $g_i(\nd x)\leq 0, \forall i{=}1, \dots, M$ is equivalent to $h(\nd x) = \sum_{i=1}^M h_i(\nd x) = 0$, where $h_i(\nd x) = \max(0, g_i(\nd x))$.
\end{theorem}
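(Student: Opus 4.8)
The statement is an elementary equivalence between a finite system of inequalities and a single scalar equation, so the plan is to prove the two implications separately, relying only on the nonnegativity of each $h_i$. First I would observe that by construction $h_i(\nd x) = \max(0, g_i(\nd x)) \geq 0$ for every $i$ and every $\nd x$, so the sum $h(\nd x) = \sum_{i=1}^M h_i(\nd x)$ is a sum of nonnegative terms. This nonnegativity is the only real ingredient; everything else is bookkeeping.

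For the forward direction, suppose $g_i(\nd x) \leq 0$ for all $i = 1, \dots, M$. Then for each $i$ we have $\max(0, g_i(\nd x)) = 0$, i.e. $h_i(\nd x) = 0$, and summing over $i$ gives $h(\nd x) = 0$. For the reverse direction, suppose $h(\nd x) = 0$. Since $h(\nd x)$ is a sum of $M$ nonnegative terms that equals zero, each term must individually vanish: $h_i(\nd x) = 0$ for all $i$. But $h_i(\nd x) = \max(0, g_i(\nd x)) = 0$ forces $g_i(\nd x) \leq 0$, since if $g_i(\nd x) > 0$ we would have $\max(0, g_i(\nd x)) = g_i(\nd x) > 0$, a contradiction. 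Hence $g_i(\nd x) \leq 0$ for all $i$, completing the equivalence.

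There is no substantive obstacle here; the only point requiring the slightest care is the step ``a sum of nonnegative reals is zero iff each summand is zero,'' which I would state explicitly rather than leave implicit, since it is the crux of why collapsing $M$ constraints into one equality loses no information. If one wanted to be more careful about the smoothness implications (the function $h$ is continuous but not differentiable where any $g_i$ changes sign), that is a separate matter not needed for the equivalence itself and would belong to the surrounding discussion of how $h(\nd x) = 0$ is then treated within the augmented Lagrangian machinery of \Cref{algo:ALSPG}.
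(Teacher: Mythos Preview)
Your proof is correct and essentially the same as the paper's: both argue the forward direction by direct computation of each $h_i=0$, and the reverse direction via the fact that a sum of nonnegative terms vanishes only if every term does (the paper phrases this as a contradiction by supposing some $g_j(\nd x)>0$, while you state the nonnegativity principle directly). There is no substantive difference between the two arguments.
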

\begin{proof}
	\begin{enumerate}
		\item If  $g_i(\nd x)\leq 0, \forall i{=}1, \dots, M$, then it is by definition that $\sum_{i=1}^M h_i(\nd x) = 0$. 
		\item Assume $\sum_{i=1}^M h_i(\nd x) = 0$ and $\exists j \quad \st \quad g_j(\nd x)>0, \quad \forall j{=}1, \dots, N<M$. Then, $\sum_{i=1}^M h_i(\nd x)=\sum_{j=1}^N h_j(\nd x)=\sum_{j=1}^N g_j(\nd x)>0$, which contradicts the assumption.
	\end{enumerate}
\end{proof}
Although it seems to simplify the problem in terms of dimensions, using \Cref{th:1} to compactly reduce all inequality constraints into one single constraint would result in the loss of some information about the gradients from each constraint in one iteration of any solver. In practice, this presents itself as a trade-off between the number of iterations and the computational complexity of each iteration to solve the optimization problem.

\section{Optimal Control with ALSPG}
\label{spg:oc}
We consider the following generic constrained optimization problem
\begin{equation}
	\displaystyle \min_{\nd x \in \mathcal{C}_{\nd x}, \nd u \in \mathcal{C}_{\nd u}} c(\nd x, \nd u)
	\quad\st\quad
	\begin{array}{l}
		\nd x = \nd F(\nd x_0, \nd u), \\
		\nd h(\nd x, \nd u) = \nd 0,
	\end{array}
	\label{eq:oc_problem}
\end{equation}
where the state trajectory $\nd x{=}\matb{\nd x_1^\trsp, \nd x_2^\trsp, \hdots, \nd x_t^\trsp, \hdots, \nd x_T^\trsp}^\trsp$, the control trajectory $\nd u=\matb{\nd u_0^\trsp, \nd u_1^\trsp, \hdots, \nd u_t^\trsp, \hdots, \nd u_{T-1}^\trsp}^\trsp$ and the function $\nd F(\cdot,\cdot)$ correspond to the forward rollout of the states using a dynamics model $\nd x_{t+1}{=}\nd f(\nd x_t, \nd u_t)$. We use a direct shooting approach and transform \Cref{eq:oc_problem} into a problem in $\nd u$ only by considering
\begin{equation}
		\displaystyle \min_{\nd u \in \mathcal{C}_{\nd u}} c(\nd F(\nd x_0, \nd u), \nd u)   
		\quad\st\quad
		\begin{array}{l}
		\nd F(\nd x_0, \nd u) \in \mathcal{C}_{\nd x}, \\
		\nd h(\nd F(\nd x_0, \nd u), \nd u) = \nd 0,
	\end{array}
	\label{eq:oc_problem2}
\end{equation}
which is exactly in the form of \Cref{eq:general_problem}, if $\nd g_1(\nd u) = \nd F(\nd x_0, \nd u)$ and $\nd g_2(\nd u) = \nd h(\nd F(\nd x_0, \nd u), \nd u)$. The unconstrained version of this problem can be solved with least-square approaches. However, assuming $\nd x_t\in\mathbb{R}^{m}$, $\nd u_t\in\mathbb{R}^{n}$, this requires the inversion of a matrix of size $Tn\times Tn$, whereas here we only work with the gradients of the objective function and the functions $\nd g_i(\cdot)$. The component that requires a special attention is $\nabla \nd F(\nd x_0, \nd u)$ and in particular, its transpose product with a vector. It turns out that this product can be efficiently computed with a recursive formula (as also described in \cite{torrisi2018projected}), resulting in fast SPG iterations. Denoting $\nd A_t=\nabla_{\nd x_t} \nd f(\nd x_t, \nd u_t)$, $\nd B_t=\nabla_{\nd u_t} \nd f(\nd x_t, \nd u_t)$, and $\nabla_{\nd u} \nd F(\nd x_0, \nd u)^\trsp\nd y = \nd z$ with $\nd y=\matb{\nd y_0, \nd y_1, \hdots, \nd y_t, \hdots, \nd y_{T-1}}$, $\nd z=\matb{\nd z_0, \nd z_1, \hdots, \nd z_t, \hdots, \nd z_{T-1}}$, one can show that the matrix vector product $\nabla_{\nd u} \nd F(\nd x_0, \nd u)^\trsp\nd y$ can be written as 

\begin{align*}
	& \matb{
		\nd B_0^\trsp & \nd B_0^\trsp \nd A_1^\trsp & \nd B_0^\trsp \nd A_1^\trsp \nd A_2^\trsp & \hdots & \nd B_0^\trsp \prod_{t=1}^{T-1}\nd A_t^\trsp\\
		\nd 0 & \nd B_1^\trsp & \nd B_1^\trsp \nd A_2^\trsp & \hdots & \nd B_1^\trsp \prod_{t=2}^{T-1}\nd A_t^\trsp \\
		\vdots & \vdots & \vdots & \ddots & \vdots \\
		\nd 0 & \nd 0 & \nd 0 & \hdots & \nd B_{T-1}^\trsp
}\matb{
\nd y_0 \\ \nd y_1 \\ \vdots \\ \nd y_{T-1}
}\\
&= \matb{
	\nd B_0^\trsp(\nd y_0 + \nd A_1^\trsp \nd y_1 + \nd A_1^\trsp \nd A_2^\trsp \nd y_2 +   \prod_{t=1}^{T-1}\nd A_t^\trsp \nd y_{T-1}) \\
	\nd B_1^\trsp(\nd y_1 + \nd A_2^\trsp \nd y_2 + \nd A_2^\trsp \nd A_3^\trsp \nd y_3 +   \prod_{t=2}^{T-1}\nd A_t^\trsp \nd y_{T-1}) \\
	\vdots \\	
	\nd B_{T-2}^\trsp(\nd y_{T-2} +  \nd A_{T-1}^\trsp \nd y_{T-1})\\
	\nd B_{T-1}^\trsp\nd y_{T-1}
},
\end{align*}
where the terms in parantheses can be computed recursively backward by $\bar{\bm{z}}_{t+1}=(\nd y_{t+1} + \nd A_{t}^\trsp\bar{\bm{z}}_{t})$, $\nd z_t=\nd B_{t-1}^\trsp\bar{\bm{z}}_{t}$ and $\bar{\bm{z}}_{T-1}=\nd y_{T-1}$, without having to construct the big matrix $\nabla_{\nd u} \nd F(\nd x_0, \nd u)^\trsp$.

Note that when there are no constraints on the state and $\nd h(\cdot)=0$, \Cref{eq:oc_problem2} can be solved directly with the SPG algorithm. We believe that SPG can be used to solve problems with higher horizons even faster than iLQR. \Cref{fig:ilqr_spg_comparison} shows a breakdown of computational times compared to the number of timesteps for a reaching planning tasks without constraints with a 7-axis manipulator. Here, we plotted the average convergence time in (s) for both algorithms with 5 different end positions in task space and horizons of 100, 1000, 2000, 3000 and 5000 timesteps. iLQR is implemented with dynamic programming. SPG is implemented as detailed in the previous section. Both implementations are in Python. 

\begin{figure}
	\centering
	\includegraphics[width=0.6\columnwidth]{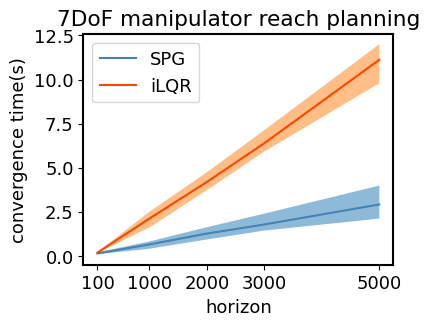}
	\caption{Comparison of iLQR and SPG in terms of convergence time evolution vs the number of timesteps or horizon.}
	\label{fig:ilqr_spg_comparison}
\end{figure}
\section{Convex polytope projections and linear transformations}
Often, Euclidean projection problems are composed of a linear transformation of the constraint set onto which the projection is easy, hindering the analytical projection property of this set. ALSPG can be used directly to solve this kind of problems in a very efficient way, still exploiting the projection capability of the base constraint set. For example, consider a unit second-order cone set as $\mathcal{C}_{\text{SOC}}=\{(\nd z, t) \: | \: \norm{\nd z}_2\leq t\}$ and a generic second-order cone (SOC) constraint as $\mathcal{C}{=}\{\nd x \: | \: \norm{\nd A \nd x + \nd b}_2 \leq \nd c^\trsp \nd x + d \}$. This can be transformed to a unit second-order cone set by taking $\nd g(\nd x){=}\matb{\nd A \nd x + \nd b & \nd c^\trsp \nd x + d}$ and therefore $\mathcal{C}{=}\{\nd x \: | \: \nd g(\nd x) \in \mathcal{C}_{\text{SOC}} \}$. Then the optimization problem of projection onto a generic second-order cone, namely, $\argmin_{\nd x} \norm{\nd x - \nd x_0}_2^2 \quad \st \quad  \norm{\nd A \nd x + \nd b}_2 \leq \nd c^\trsp \nd x + d$ can be rewritten as $\argmin_{\nd x} \norm{\nd x - \nd x_0}_2^2 \quad \st \quad \nd g(\nd x) \in \mathcal{C}_{\text{SOC}}$ and can be solved efficiently using the ALSPG algorithm with only unit second-order cone projections, without requiring explicit derivatives of the cone constraints.

In the case of convex polytope projections, we can even find some conditions when the linear transformation does not break the analytical projections. Especially for rectangular projections, which are a special case of convex polytope projections, we can find special conditions such that we can still find analytical expressions even if we rotate and scale the rectangles. In the next section, we give the development and insights of convex polytope projections as these are one of the most commonly encountered constraint types in robotics problems such obstacle avoidance. We then explain what kind of linear transformations can be applied to projections to preserve their analytical projection capability.

\subsection{Convex polytope projections}
A convex polytope of $n$ sides can be described by $n$ lines with slopes $\nd a_i$ and intercepts $b_i$. The inside region of this polytope (e.g. for reaching) is given by ``\emph{and}'' constraints $\mathcal{C}_{\text{polytope}}^{\text{in}}=\{ \nd x | \bigwedge_{i=0}^n\nd a_i^\trsp \nd x \leq u_i\}$ while the outside region (e.g. for obstacle avoidance) is given by its negative statement with ``\emph{or}'' constraints $\mathcal{C}_{\text{polytope}}^{\text{out}}=\{\nd x | \bigvee_{i=0}^n\nd a_i^\trsp \nd x > l_i$\}. The projection onto $\mathcal{C}_{\text{polytope}}^{\text{in}}$ can be described as a summation of $n$ hyperplane projections in ALSPG. Even though constraints for the set $\mathcal{C}_{\text{polytope}}^{\text{out}}$ can not be easily described in general optimization solvers, we can show that the projection of a point $\nd x_0$ onto this set requires finding the closest hyperplane $i$ to $\nd x_0$, then the projection outside the hyperplane with index $i$, namely $\Pi^i_{\mathcal{C}_{\text{polytope}}^{\text{out}}}(\nd x_0)$. The minimum value of the objective function of the projection becomes $\norm{\Pi^i_{\mathcal{C}_{\text{polytope}}^{\text{out}}}(\nd x_0)- \nd x_0}$ which is equal to the distance of $\nd x_0$ to the hyperplane $i$ (one can check this by inserting the corresponding values from \Cref{table:projections}). This observation makes significant simplifications for the solvers that can take projections into account. 

In this section, we give the simplifications of this idea for often-encountered rectangular regions. The constraint of being inside a square region, also called a box constraint, can be described by infinity norms as the set $\mathcal{C}_{\text{rect}}^{\text{in}}{=}\{\nd x \: | \:  \norm{\nd x}_{\infty} \leq u \}$ represents the inside region of a square of width $u$ centered at the origin. This is basically a compact description of 4 lines (in 2D) describing the square, i.e., $x\leq u$, $-u\leq x$, $y\leq u$, $-u\leq y$. This observation allows us to write down $\mathcal{C}_{\text{rect}}^{\text{out}}{=}\{x \: | \: l \leq \norm{\nd x}_{\infty}\}$ which represents the outside region of a square of width $l$ centered at the origin. $\mathcal{C}_{\text{rect}}^{\text{in}}$ is a simple clipping operation for $\nd x_0$ as described in \Cref{table:projections}. However, $\mathcal{C}_{\text{rect}}^{\text{out}}$ requires setting up the optimization problem for the Euclidean projection and checking the KKT conditions. For conciseness, we give here only the resulting projection. Denoting $k$ the index where $k{=}\argmax_i \abs{\nd x_{0,i}}$, the projection onto $\mathcal{C}_{\text{rect}}^{\text{out}}$ is then given by 
\begin{equation}
	\Pi_{\mathcal{C}}(\nd x_0)_j = 		
	\begin{cases}
		\nd x_{0,j} & \!\!\text{if } \nd x_{0,k} {\leq} l,\\
		l\sign{\nd x_{0,k}}  & \!\!\text{otherwise} . 
	\end{cases}
\end{equation}

\subsection{Linear transformation of projections}
Having stated projections for some basic geometric primitives, one may need to apply rotation and translation operations to such shapes to exploit more complex ones. One such example is the transformation of square projections onto rotated and translated square regions. Considering a convex set $\mathcal{C}=\{\nd x | f(\nd x) \leq t\}$, one can show that the projection onto $\mathcal{C}^{\prime}=\{\nd x | f(\nd A(\nd x - \nd x_c)) \leq t\}$ is given by $\Pi_{\mathcal{C}^{\prime}}(\nd x_0)=\nd A^{\inv}\Pi_{\mathcal{C}}(\nd A(\nd x_0 - \nd x_c))+\nd x_c$, where $\nd A$ is an orthogonal matrix. For creating rectangular regions, one needs to scale each dimension of the variable, i.e., multiplying by a diagonal matrix. Even though this does not generalize to all cases, for the rectangular regions, one can show that $\nd A$ can be in the form of a multiplication of an orthogonal matrix and a diagonal matrix. For example, while a square of length $L$ can be described by the set $\mathcal{C}=\{\nd x | \norm{\nd x}_{\infty}=L/2 \}$, a rectangle of length $L$ and width $W$, which is rotated by an angle $\theta$ can be described with the transformation matrix $\nd A=\nd R(\theta)\nd D$, where $\nd R$ is the rotation matrix, and $\nd D=\diag(1, L/W)$. 

\section{Experiments}
\label{sec:experiments}
In this section, we perform experiments solving inverse kinematics problems, motion planning and MPC for a task with hybrid dynamics, and motion planning for rectangular obstacle avoidance. The motivation behind these experiments is to show that: 1) the proposed way of solving these robotics problem can be unconventionally faster than the second-order methods such as iLQR; and 2) exploiting projections whenever we can, instead of leaving the constraints for the solver to treat them as generic constraints, increases the performance significantly.

\subsection{Constrained inverse kinematics}
A constrained inverse kinematics problem can be described in many ways using projections. One typical way is to find a $\nd q \in \mathcal{C}_{\nd q}$ that minimizes a cost to be away from a given initial configuration $\nd q_0$ while respecting general constraints $\nd h(\nd q) = \nd 0$ and projection constraints $\nd f(\nd q) \in \mathcal{C}_{\nd x}$
\begin{equation}
	\min_{\nd q \in \mathcal{C}_{\nd q}} \norm{\nd q - \nd q_0}_2^2  
	\quad\st\quad 
	\begin{array}{l}
		\nd h(\nd q) = \nd 0, \\
		\nd f(\nd q) \in \mathcal{C}_{\nd x},
	\end{array}
\end{equation}
where $\nd f(\cdot)$ can represent entities such as the end-effector pose or the center of mass for which the constraints are easier to be expressed as projections onto $\mathcal{C}_{\nd x}$, and $\mathcal{C}_{\nd q}$ can represent the configuration space within the joint limits. \Cref{fig:ik_examples} shows a 3-axis planar manipulator with $\nd f(\cdot)$ representing the end-effector position and $\mathcal{C}_{\nd x}$ denoting (a) $\mathcal{C}_{\nd x}=\{\nd x \: | \: \nd x{=}\nd x_d\}$, (b) $\mathcal{C}_{\nd x}=\{\nd x \: | \: \nd a^\trsp \nd x + b{=}0\}$, (c) $\mathcal{C}_{\nd x}=\{\nd x \: | \: \leq r_i^2 \leq \norm{\nd x-\nd x_d}_2^2 \leq r_o^2\}$ and (d) $\mathcal{C}_{\nd x}=\{\nd x \: \: | \: \: \norm{\nd x-\nd x_d}_{\infty, \nd W} \leq L \}$. We applied the ALSPG algorithm iteratively to obtain a reactive control loop and we implemented it on an interactive webpage running Python, see \Cref{fig:ik_examples}.

\textbf{Talos IK:} We tested our algorithm on a high dimensional (32 DoF) inverse kinematics problem of TALOS robot (see \Cref{fig:talos}) subject to constraints: i) center of mass inside a box; ii) end-effector constrained to lie inside a sphere; and iii) foot position and orientations are given. We compared two versions of ALSPG algorithm: 1) by casting these constraints as projections onto $\mathcal{C}_{\nd x}$; and 2) by keeping all the constraints inside the function $\nd h(\cdot)$ to see direct advantages of exploiting projections in ALSPG. We ran the algorithm from 1000 different random initial configurations for both cases and compared the number of function and Jacobian evaluations, $n_f$ and $n_j$. For case 1), we obtained $n_f{=}897.64\pm82.84$ and $n_j{=}883.44\pm 81.7$, while for case 2), we obtained
$n_f{=}6459.4\pm 3756.8$ and $n_j{=}3791.79\pm 1061.05$.

\textbf{Robust IK:} In this experiment, we would like to achieve a task of reaching and staying in the half-space under a plane whose slope is stochastic because, for example, of the uncertainties in the measurements of the vision system. The constraint can be written as $\nd a^\trsp \nd f(\nd q) \leq 0$, where $\nd a \sim \mathcal{N}(\nd \mu, \nd \Sigma)$. We can transform it into a chance-constraint to provide some safety guarantees in a probabilistic manner. The idea is to find a joint configuration $\nd q$ such that it will stay under a stochastic hyperplane with a probability of $\eta\geq0.5$. This inequality can be written as a second-order cone constraint wrt $\nd f(\nd q)$ as $\nd \mu^\trsp \nd f(\nd q) + \Psi^{-1}(\eta) \norm{\nd \Sigma^{\frac{1}{2}}\nd f(\nd q)}_2 \leq 0$, where $\Psi(\cdot)$ is the cumulative distribution function of zero mean unit variance Gaussian variable. Defining $\nd g(\nd q)=\matb{(\nd \Sigma^{\frac{1}{2}}f(\nd q))^\trsp & \nd \mu^\trsp \nd f(\nd q)}^\trsp$, the optimization problem can then be defined as 
\begin{equation}
	\displaystyle \min_{\nd q \in \mathcal{C}_{\nd q}} \norm{\nd q - \nd q_0}_2^2
	\quad\st\quad 
	\nd g(\nd q) \in \mathcal{C}_{\text{SOC}},
\end{equation}
which can be solved efficiently without using second-order cone (SOC) gradients, by using the proposed algorithm. We tested the algorithm on the 3-axis robot shown in \Cref{fig:soc_manip} by optimizing for a joint configuration with a probability of $\eta=0.8$ and then computing continuously the constraint violation for the last 1000 time steps by sampling a line slope from the given distribution. We obtained a constraint violation percentage of around 80\%, as expected.
\begin{figure}
	\centering
	\subcaptionbox{\label{fig:talos}}
	[0.4\columnwidth]{\includegraphics[width=0.4\columnwidth]{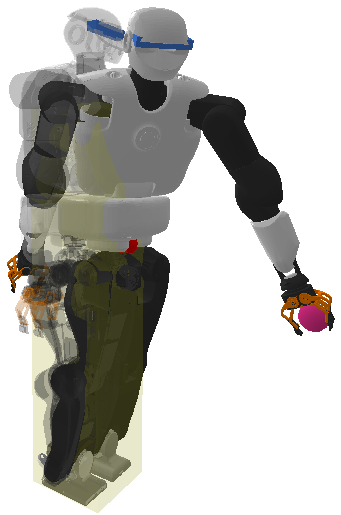}}
	\subcaptionbox{\label{fig:soc_manip}}[0.4\columnwidth]{\includegraphics[width=0.4\columnwidth]{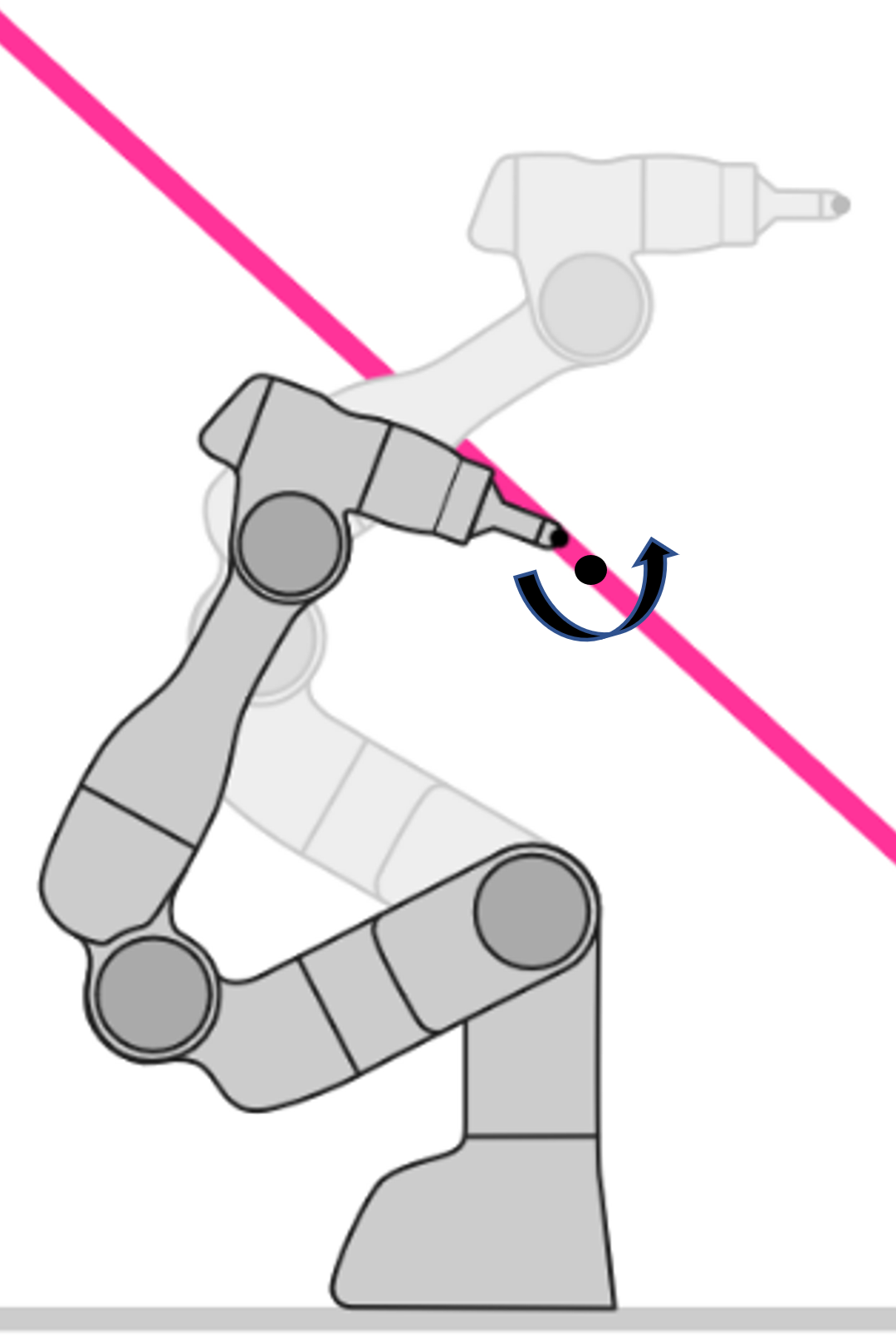}}
	\vspace{2mm}
	\caption{Inverse kinematics problems solved with the proposed algorithm. (a) Talos inverse kinematics problem with foot pose, center of mass stability (red point inside yellow rectangular prism) and end-effector inside a (pink) sphere constraints. (b) Robust inverse kinematics solution with $\mathcal{C}_{\nd p}=\{\nd p \: | \:\nd \mu^\trsp \nd p + \Psi^{-1}(\eta) \norm{\nd \Sigma^{\frac{1}{2}}\nd p}_2 \leq 0$\}.}
\end{figure}

\subsection{Motion planning and MPC on planar push}
Non-prehensile manipulation has been widely studied as
a challenging task for model-based planning and control,
with the pusher-slider system as one of the most prominent
examples (see \Cref{fig:push_spg}). The reasons include hybrid dynamics with various interaction modes, underactuation and contact uncertainty. In this experiment, we study motion planning and MPC on this planar push system, without any constraints, to compare to a standard iLQR implementation. Motion planning convergence results for 10 different tasks are given for iLQR and ALSPG along with means and variances in \Cref{fig:push_plan_vs}. Although iLQR seems to converge to medium accuracy faster than ALSPG, because of the difficulties in the task dynamics, it seems to get stuck at local minima very easily. On the other hand, ALSPG seems to perform better in terms of variance and local minima. We applied MPC with iLQR and ALSPG with an horizon of 60 timesteps and stopped the MPC as soon as it reached the goal position with a desired precision. \Cref{table:push_mpc_vs} shows this comparison in terms of convergence time (s), number of function evaluations and number of Jacobian evaluations. According to these findings, ALSPG performs better than a standard iLQR, even when there are no constraints in the problem.
\begin{table}[]
	\centering
	\caption{Comparison of MPC with iLQR and ALSPG for planar push}
	\begin{tabular}{l|l|l|}
		\cline{2-3}
		& iLQR              & ALSPG              \\ \hline
		\multicolumn{1}{|l|}{Convergence time (s)}   & $14.5 \pm 1.3$         & $2.9 \pm 0.5$      \\ \hline
		\multicolumn{1}{|l|}{Number of function ev.} & $26689.5 \pm 1830.5$ & $6104.0 \pm 1455.6$ \\ \hline
		\multicolumn{1}{|l|}{Number of jacobian ev.} & $225.9 \pm 27.4$       & $78.4 \pm 8.5$      \\ \hline
	\end{tabular}
	\label{table:push_mpc_vs}
\end{table}

\begin{figure}
	\centering
	\begin{subfigure}{0.48\columnwidth}
		\includegraphics[width=\columnwidth]{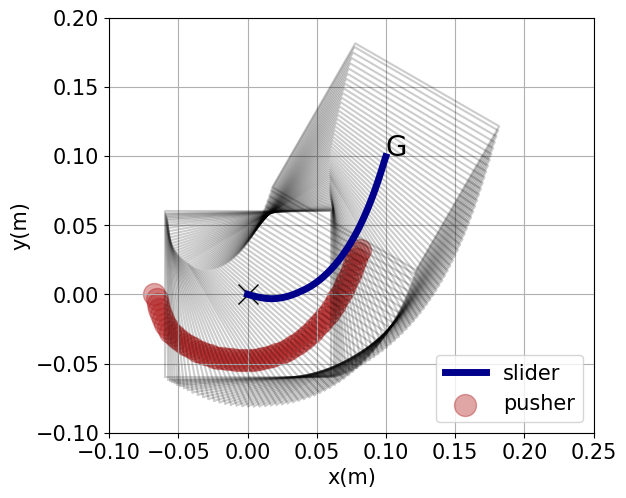}
		\caption{Pusher-slider system path optimized by the proposed algorithm to go from the state $(0,0,0)$ to $(0.1, 0.1, \pi/3)$. Optimal control solved with SPG results in a smooth path for the pusher-slider system.}
	\end{subfigure}\hfill
	\begin{subfigure}{0.48\columnwidth}
		\centering
		\includegraphics[width=0.87\columnwidth]{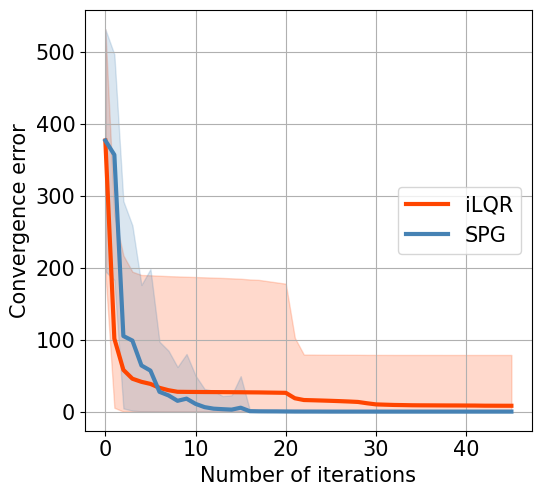}
		\caption{Convergence error mean and variance plot for iLQR and ALSPG motion planning algorithm for 10 different goal conditions starting from the same initial positions and control commands.}
		\label{fig:push_plan_vs}
	\end{subfigure}
	\vspace{2mm}
	\caption{SPG algorithm applied to a pusher-slider system.}
	\label{fig:push_spg}
\end{figure}

\subsection{Motion planning with obstacle avoidance}
Obstacle avoidance problems are usually described using  geometric constraints. In robot manipulation tasks, capsules and spheres are typically used to represent the robot and the environment, such that the shortest distance computations and their gradients can be computed efficiently. In autonomous parking tasks, obstacles and cars are usually described as 2D rectangular objects. In this experiment, we take a 2D double integrator point car reaching a target pose in the presence of rectangular obstacles (see \Cref{fig:obstacle}). We apply ALSPG algorithm with and without projections (ALSPG-Proj. and  ALSPG-WoProj.) to illustrate the main advantages of having an explicit projection function over direct constraints. The main difference is without projections, the solvers need to compute the gradient of the constraints, whereas with projections, this is not necessary. In order to understand the differences between first-order and second-order methods, we also compared ALSPG-Proj to AL-SLSQP with projections (SLSQP-Proj.), which is the same algorithm except the subproblem is solved by a second-order solver SLSQP from Scipy. 
The objective function is $c(\nd x, \nd u)=10^{-1}\norm{\nd x_T -\nd x_T^{\text{G}}}_{2}^2 + 10^{-4}\norm{\nd u_T}_{2}^2$. We performed 5 experiments, each with different settings of 4 rectangular obstacles and compared the convergence properties. The results are given in \Cref{table:obstacle_vs}. The comparison between ALSPG-Proj. and  ALSPG-WoProj. reports a clear advantage of using projections instead of plain constraints in the convergence properties. Although the convergence time comparison is not necessarily fair for SPG implementations as the SLSQP solver calls C++ functions, the comparison of ALSPG-Proj. and  SLSQP-Proj. shows that ALSPG-Proj. still achieves lower convergence time. 

\subsection{MPC on 7-axis manipulator}
We tested ALSPG algorithm on the MPC problem of tracking an object with box constraints on the end-effector position of a Franka Emika robot (see \Cref{fig:exp_setup}). An Aruco marker on the object is tracked by a camera held by another robot. In this experiment, the goal is to show the real-time applicability of the proposed algorithm for a constrained problem in the presence of disturbances. In \Cref{fig:mpc_box}, the error of the constraints and the objective function is given for a 1 min. time period of MPC with a short-horizon of 50 timesteps. Between 20s and 30s, the robot is disturbed by the user thanks to the compliant torque controller run on the robot. We can see that the algorithm drives smoothly the error to zero, see accompanying video. 

\begin{figure}
	\centering
	\includegraphics[width=0.9\columnwidth]{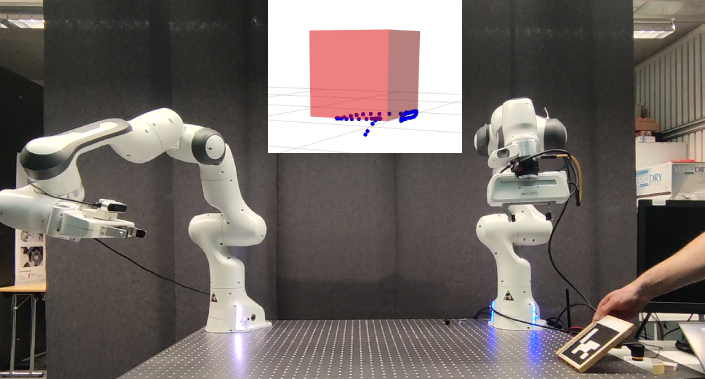}
	\caption{MPC setup for tracking an object subject to box constraints.}
	\label{fig:exp_setup}
\end{figure}

\begin{figure}
	\centering
	\includegraphics[width=\columnwidth]{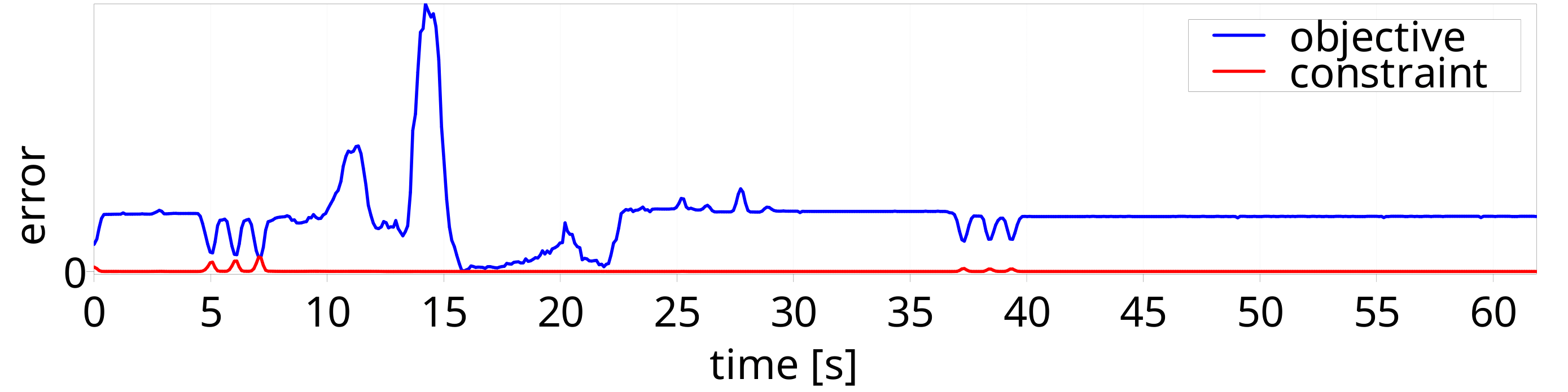}
	\caption{Error in the objective and the squared norm of the constraint value during 1 min execution of MPC on Franka Emika robot.}
	\label{fig:mpc_box}
\end{figure}

\begin{figure}
	\centering
	\includegraphics[width=0.5\columnwidth]{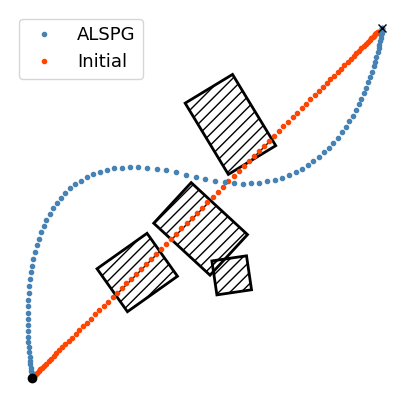}
	\caption{Motion planning problem in the presence of 4 scaled and rotated rectangular obstacles.}
	\label{fig:obstacle}
\end{figure}

\begin{table}[]
	\centering
	\caption{Comparison of motion planning for obstacle avoidance for three cases}
	\setlength\tabcolsep{0.3pt}
	\begin{tabular}{l|l|l|l|}
		\cline{2-4}
		& ALSPG-Proj.  & SLSQP-Proj.  & ALSPG-WoProj.           \\ \hline
		\multicolumn{1}{|l|}{Convergence time (ms)}   & $392.0 \pm 66.2$         & $679.0 \pm 260.0$ & $2780.0 \pm 530.9$      \\ \hline
		\multicolumn{1}{|l|}{Number of function ev.} & $332.0 \pm 60.6$ & $438.4 \pm 200.7$ & $571.8 \pm 109.9$ \\ \hline
		\multicolumn{1}{|l|}{Number of jacobian ev.} & $187.8 \pm 34.6$       & $389.2 \pm 164.3$ & $399.0 \pm 93.9$      \\ \hline
	\end{tabular}

	\label{table:obstacle_vs}
\end{table}

\section{Conclusion}
\label{sec:spg_conclusion}
In this work, we presented a fast first-order constrained optimization framework based on geometric projections, and applied it to various robotics problems ranging from inverse kinematics to motion planning. We showed that many of the geometric constraints can be rewritten as a logic combination of geometric primitives onto which the projections admit analytical expressions. We built an augmented Lagrangian method with spectral projected gradient descent as subproblem solver for constrained optimal control. We demonstrated: 1) the advantages of using projections when compared to setting up the geometric constraints as plain constraints with gradient information to the solvers; and 2) the advantages of using spectral projected gradient descent based motion planning compared to a standard second-order iLQR algorithm through different robot experiments. 

Sample-based MPC have been increasingly popular in recent years thanks to its fast practical implementations, despite their lack of theoretical guarantees. In contrast, second-order methods for MPC require a lot of computational power but with somewhat better convergence guarantees. We argue that ALSPG, being already in-between these two methodologies in terms of these properties, promises great future work to combine it with sample-based MPC to further increase its advantages on both sides.

\bibliographystyle{IEEEtran}
\bibliography{bib_spg}

\end{document}